\DeclareMathOperator*{\argmin}{arg\,min} 
\DeclareMathOperator*{\argmax}{arg\,max} 
\DeclareMathOperator*{\tr}{tr}
\DeclareMathOperator*{\diag}{diag}
\theoremstyle{plain}
\newtheorem{theorem}{Theorem}[section]
\theoremstyle{definition}
\newtheorem{definition}[theorem]{Definition}
\theoremstyle{remark}
\icmltitlerunning{Occam's model: Selecting simpler representations for better transferability estimation}
\begin{document}

\twocolumn[
\icmltitle{Occam's model: Selecting simpler representations for better transferability estimation}



\icmlsetsymbol{equal}{*}

\begin{icmlauthorlist}
\icmlauthor{Prabhant Singh}{equal,yyy}
\icmlauthor{Sibylle Hess}{equal,yyy}
\icmlauthor{Joaquin Vanschoren}{equal,yyy}

\end{icmlauthorlist}

\icmlaffiliation{yyy}{Eindhoven University of Technology, Netherlands}

\icmlcorrespondingauthor{Prabhant Singh}{p.singh@tue.nl}

\icmlkeywords{Machine Learning, ICML}

\vskip 0.3in
]




\begin{abstract}
Fine-tuning models that have been pre-trained on large datasets has become a cornerstone of modern machine learning workflows. With the widespread availability of online model repositories, such as Hugging Face, it is now easier than ever to fine-tune pre-trained models for specific tasks. This raises a critical question: which pre-trained model is most suitable for a given task? This problem is called transferability estimation.
In this work, we introduce two novel and effective metrics for estimating the transferability of pre-trained models. Our approach is grounded in viewing transferability as a measure of how easily a pre-trained model's representations can be trained to separate target classes, providing a unique perspective on transferability estimation. We rigorously evaluate the proposed metrics against state-of-the-art alternatives across diverse problem settings, demonstrating their robustness and practical utility. Additionally, we present theoretical insights that explain our metrics' efficacy and adaptability to various scenarios. We experimentally show that our metrics increase Kendall's Tau by up to 32\% compared to the state-of-the-art baselines. 
\end{abstract}

\section{Introduction}\label{introduction}
Using models pre-trained on large datasets like ImageNet~\cite{imagenet} has become a standard practice in real-world deep-learning scenarios. For example, the top five models on HuggingFace have been downloaded more than 200M times. HuggingFace hosts more than 15K models for image classification. The performance and efficiency gain from using models pre-trained on large datasets like ImageNet 21k~\cite{imagenet} and LIAON~\cite{schuhmann2022laionb} is enormous. However, these performance gains can vary considerably depending on model architecture, weights, and the dataset it was pre-trained on (source dataset). This leads to the pre-trained model selection problem. Although the model selection task has strong roots in AutoML, applying classic model selection paradigms is computationally too expensive in this scenario. Fine-tuning each model on the target data for a search strategy like Bayesian optimization is not feasible since the fine-tuning step is too expensive. This raises the question: \textit{"How can we find a high-performing pre-trained model for a given target task without fine-tuning our models and without access to the source dataset."}

\begin{figure}[t]
    \centering
    \includegraphics[width=\columnwidth]{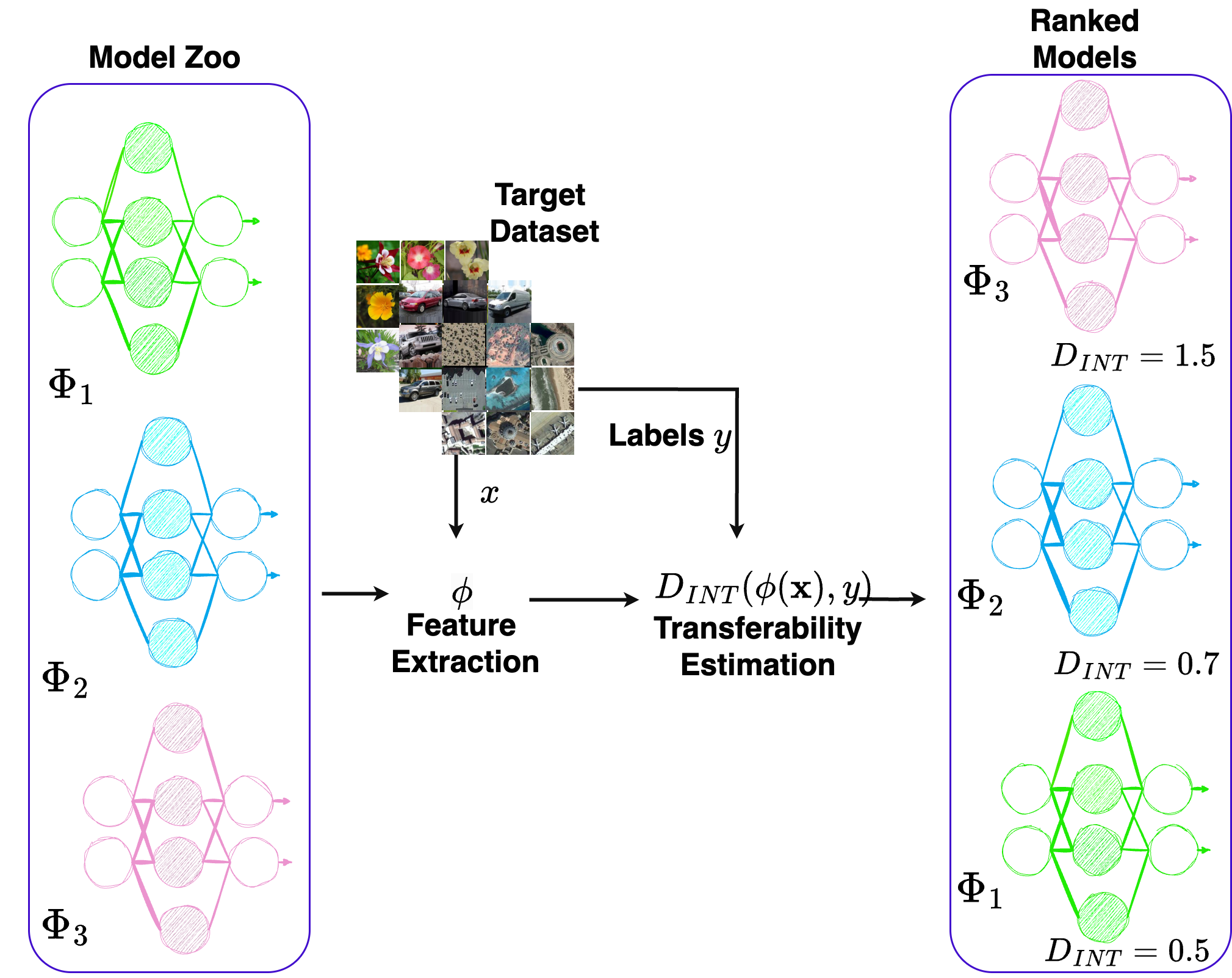}
    \caption{Given a set of pretrained models and a target dataset, extract the embeddings and estimate a complexity score $D_{INT}$ to rank pretrained models based on their suitability for the target task.}
    \label{fig:TransferabilityEstimation}
\end{figure}

This question is answered by transferability estimation methods. The idea behind transferability estimation is to assign a score to each pre-trained model of a given set for a target task, such that one can select the top-performing model for the given task. In the last decade, there have been multiple works, addressing this problem from various perspectives. For example, TransRate~\cite{TransRate} treats the problem from an information theory point of view, and ETran connects the problem of transferability estimation to energy-based models. There have been numerous methods that treat this problem from multiple perspectives like linearization, Bayesian modeling, matrix analysis, etc. However, we found that these approaches fall short in many practical scenarios.

In this work, we introduce a new transferability estimation metrics by analyzing how easily a pre-trained model's representations separate target classes. A well-trained model should produce embeddings where class distinctions are approximately clear, reducing what we call 'representational complexity'. Recent work by \citet{repcomplexity} uses data complexity measures by \citet{hobasu} to evaluate this notion of representational complexity and shows how data complexity evolves through the network and how it changes during training.  We show that we can evaluate representational complexity by assessing interclass separation and concept variance. 


In this work we were inspired by the work in cluster separation and concept variance, leading us to derive two new transferability estimation metrics. We treat the embeddings of the target task from the pre-trained model and true labels to compute our transferability metrics. We examine the embeddings generated for an unseen task by a pre-trained network and assess how challenging the task is for the network to fine-tune. Our first solution treats this problem as a clustering quality problem and the second one treats it as a Concept Variance problem. We compare performance of our metrics with the current state of the art from various families of metrics. Our solutions outperform current state-of-the-art metrics. We also show the effectiveness of our methods in a new challenging search space with modern neural networks and complex datasets in two different settings.  In conclusion, this work
In conclusion in this work:
\begin{itemize}
    \item We introduce two different metrics for transferability estimation, $INT$ and Concept Variance.
    \item We propose a new and challenging experimental benchmark for robust evaluation of transferability estimation metrics in multi-domain(7 domains) image classification settings with modern neural networks. We show that our method outperforms current state-of-the-art metrics with a significant increase in Kendall's tau $\tau_\omega$(32\%). 
    \item We present theory and insight behind the effectiveness of our methods.
\end{itemize}

\section{Related work}\label{related works}
The idea behind transferability estimation is simple: to estimate which model from a zoo would perform best after fine-tuning the model. Transferability estimation as a field is fairly new, the H-Score~\cite{Hscore} and NCE~\cite{NCE} can be considered as early works on this topic, introducing the evaluation of transferability, and the assignment of models corresponding to an estimate of their transferability, for a given target task. 

There are two widely accepted problem scenarios for transferability estimation: source-dependent transferability estimation (where one has access to the source and target dataset) and source-independent transferability estimation (where one does not have access to the source dataset). 
\subsection{Source Dependent Transferability Estimation (SDTE)}
The SDTE scenario assumes access to the source data sets where the models have been pre-trained.
Apart from the fact that this assumption is often not met, a drawback of common SDTE metrics, they use distribution matching methods like optimal transport~\cite{otce}, which are typically very expensive to compute. In addition, SDTE metrics are usually not reliable when the discrepancy between the source and target dataset is very high, for example, when comparing entire the ImageNet21K~\cite{imagenet} to Cars~\cite{cars} or Plants~\cite{plantvillage} dataset.

\subsection{Source Independent Transferability Estimation(SITE)}
The Source Independent Transferability Estimation (SITE) assumes access to the source model but not the source training data. This is a more realistic transferability estimation as we might not always have access to the source dataset, nor have the capacity to store the typically very large source datasets like ImageNet~\cite{imagenet} or LAION~\cite{schuhmann2022laionb} in our local setup. SITE methods typically rely on evaluating the feature representation of the source model on the target dataset and its relationship with target labels. 

There are several transferability metrics inspired by various viewpoints. LogME~\cite{logme} formalizes the transferability
estimation as the maximum label marginalized likelihood
and adopts a directed graphical model to solve it. SFDA~\cite{SFDA} proposes a self-challenging mechanism, it first maps the features and then calculates the sum of log-likelihood as the metric.  ETran~\cite{ETran} and PED~\cite{PED} treat the problem of SITE with an energy function, ETran uses energy-based models to detect whether a target dataset is in-distribution or out of distribution for a given pre-trained model whereas PED utilizes potential energy function to modify feature representations to aid other transferability metrics like LogMe and SFDA. NCTI~\cite{NCTI} treats it as a nearest centroid classifier problem and measures how close the geometry of the target features is to their hypothetical state in the
terminal stage of the fine-tuned model. LEEP~\cite{leep} is the average log-likelihood of the log-expected empirical predictor, which is a non-parameter classifier based on the joint distribution of the source and target distribution, N-LEEP~\cite{NLEEP} is a further improvement on LEEP by substituting the output layer with a Gaussian
mixture model. TransRate~\cite{TransRate} treats SITE from an information theory point of view by measuring the transferability as the mutual information between features of target examples extracted by a pre-trained model and their labels. We suggest the survey by \citet{ding2024modeltransfersurveytransferability} for a complete view of transferability metrics. 

Of these existing methods, the approach of NCTI is closest to ours. NCTI checks to which extent the neural collapse criteria~\cite{neuralcollapse} are satisfied on the target embedding. 
We argue (in Section \ref{sec:clustering}) that neural collapse is a byproduct of properties of the loss function, that incentivize in later training stages the embedded points of one class to collapse to their mean. However, if we want to assess the transferability, checking for effects taking place late in training might not be a failproof approach.

\section{Occam's model: Transferability Estimation with finding simpler representation}

\subsection{Problem statement }

We assume that we are given a target dataset $\mathcal{D} = \{(\mathbf{x}_n,y_n)\}_{n=1}^N$ of $N$ labeled points and $M$ pre-trained models $\{\Phi_m=(\phi_m, \psi_m)\}_{m=1}^M$. Each model $\Phi_m$ consists of a feature extractor that returns a $d$-dimensional embedding $\phi_m(x)\in\mathbb{R}^d$ and the final layer or head $\psi_m$ that outputs the label prediction for the given input $x$. The task of estimating transferability is to generate a score for each pre-trained model so that the best model can be identified via a ranking list. For each pre-trained model $\Phi_m$ a transferability metric outputs a scalar score $T_m$ that should be coherent in its ranking with the performance of the fine-tuned classifier $\hat{\Phi}_m$. That is, the goal is to obtain scores $T_m$ such that
\begin{align*}
     T_m&\geq T_n \\\Leftrightarrow  \frac{1}{N}\sum_{n=1}^Np(y_n|\mathbf{x}_n; \hat{\Phi}_m) &\geq \frac{1}{N}\sum_{n=1}^Np(y_n|\mathbf{x}_n; \hat{\Phi}_n),
\end{align*}
where $p(y_n|x_n; \hat{\Phi}_m)$ indicates the probability that the fine-tuned model $\hat{\Phi}_m$ predicts label $y_n$ for input $\mathbf{x}_n$.
A larger $T_m$ indicates better performance model on target data $\mathcal{D}$.

In this work, we aim to asses transferability by finding the degree of simplicity of our representations.
We hypothesize that a classifier can be more easily fine-tuned subject to a target dataset if the embedding $\{\phi(\mathbf{x}_n)\mid 1\leq n\leq N\}$ already has a simple structure in relationship to the labels. As a simple structure, we consider for example an embedding where the points exhibit clustering properties, where the clusters coincide with the classes, or a simple distribution of points in one class. 
Correspondingly, we define our metrics, one that is inspired by clustering properties~\cite{clustering} and one that is motivated by concept characterization and variation~\cite{conceptlearning, conceptvar}.

\subsection{Clustering Approach: Measuring cluster separability as a means of transferability}\label{sec:clustering}
Deep neural network classifiers are trained with the cross-entropy loss. Given a classifier $f_\theta:\mathbb{R}^d\rightarrow [0,1]^C$, indicating the confidence $f_\theta(\mathbf{x})_c$ for class $1\leq c\leq C$ and input $\mathbf{x}$, the cross-entropy loss is defined as
\[CE(y,f_\theta) = -\frac{1}{N}\sum_{n=1}^N\log f_{\theta}(\mathbf{x}_n)_{y_n}.\]
The classifier $f_\theta$ can be seen as a softmax regression (multinomial logistic regression) classifier $h_{W,b}(\mathbf{z})=\mathrm{softmax}(W\mathbf{z}+b)$ applied to an embedding $\phi(\mathbf{x})$. This way, we write $f_\theta(\mathbf{x})=h(\phi(\mathbf{x}))$.

The cross entropy loss is low if there exist vectors $W_{\cdot y}$ and biases $b_y$ for each class $y$, such that the linear function value $W_{\cdot c}^\top \mathbf{z}+b_c$ achieves its maximum value for points from class $y$ at $c=y$. If we want to estimate how well a multinomial logistic regression model would fit the embeddings $\phi(\mathbf{x})$, then we could try to train a multinomial logistic regression on the target embedding. However, this procedure does not take into account that the embedding is flexible. Small changes in the embedding, performed during finetuning, have possibly big impacts on the classifier accuracy. Hence, we rather want to estimate how close the embedding is to a representation that is well-classifiable. 

To gain insights into the classifiability of an embedding, we consider the formulation of the multinomial logistic regression objective as a Linear Discriminant Analysis (LDA) model.
\begin{theorem}\label{thm:loss}
    For any multinomial regression model $h_{W,b}(\mathbf{z})=\mathrm{softmax}(W\mathbf{x}+b)$ exist class centers $\mu_1,\ldots, \mu_C$ such that 
    \[h_{W,b}(\mathbf{x})_y = \frac{\exp(-\frac12\lVert \mathbf{x}-\mu_y\rVert^2)}{\sum_{c=1}^C\exp(-\frac12\lVert \mathbf{x}-\mu_c\rVert^2)}\]
\end{theorem}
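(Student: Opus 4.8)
The plan is to use the shift-invariance of the softmax: for any logit vector $(\ell_1,\dots,\ell_C)$ and any scalar $t$ that may depend on $\mathbf{x}$, one has $\mathrm{softmax}(\ell_1+t,\dots,\ell_C+t)=\mathrm{softmax}(\ell_1,\dots,\ell_C)$. Consequently it is enough to exhibit centers $\mu_1,\dots,\mu_C$ for which the quadratic ``logits'' $-\tfrac{1}{2}\lVert\mathbf{x}-\mu_y\rVert^2$ coincide with the affine logits $W_{\cdot y}^\top\mathbf{x}+b_y$ up to a term independent of the class index $y$. I would begin by completing the square: $-\tfrac{1}{2}\lVert\mathbf{x}-\mu_y\rVert^2=-\tfrac{1}{2}\lVert\mathbf{x}\rVert^2+\mu_y^\top\mathbf{x}-\tfrac{1}{2}\lVert\mu_y\rVert^2$, where the term $-\tfrac{1}{2}\lVert\mathbf{x}\rVert^2$ is already class-independent and hence harmless. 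The statement therefore reduces to finding $\mu_1,\dots,\mu_C$, a vector $v\in\mathbb{R}^d$, and a scalar $\kappa$ such that $\mu_y^\top\mathbf{x}-\tfrac{1}{2}\lVert\mu_y\rVert^2=W_{\cdot y}^\top\mathbf{x}+b_y+v^\top\mathbf{x}+\kappa$ holds for all $\mathbf{x}$ and all $y$, the allowed class-independent term being the affine function $v^\top\mathbf{x}+\kappa$ (together with the already-discarded $-\tfrac{1}{2}\lVert\mathbf{x}\rVert^2$).

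Next I would match coefficients in $\mathbf{x}$. Equating the linear parts forces $\mu_y=W_{\cdot y}+v$ with the same $v$ for every class. Substituting this back, the residual condition, which no longer involves $\mathbf{x}$, becomes $-\tfrac{1}{2}\lVert W_{\cdot y}+v\rVert^2=b_y+\kappa$ for all $y$; expanding the norm, this is
\[\langle W_{\cdot y},\,v\rangle+\tfrac{1}{2}\lVert W_{\cdot y}\rVert^2+b_y=-\kappa\qquad(y=1,\dots,C).\]
Eliminating the free scalar $\kappa$ by subtracting the $y=C$ equation from the others turns this into the linear system $\langle W_{\cdot C}-W_{\cdot y},\,v\rangle=b_y-b_C+\tfrac{1}{2}\bigl(\lVert W_{\cdot y}\rVert^2-\lVert W_{\cdot C}\rVert^2\bigr)$ in the single unknown $v\in\mathbb{R}^d$, for $y=1,\dots,C-1$. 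Any solution $v$ yields centers $\mu_y:=W_{\cdot y}+v$, and reading the chain of equalities backwards recovers exactly the identity in the theorem.

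The step I expect to be the real obstacle is establishing solvability of that final linear system. It is solvable iff its right-hand side lies in the span of the difference vectors $W_{\cdot C}-W_{\cdot y}$, which is automatic whenever those $C-1$ differences are linearly independent, and in particular whenever $W$ has full column rank. That is exactly the relevant regime for transferability estimation, where the embedding dimension $d$ far exceeds the number $C$ of target classes, so the hypothesis is met by essentially any trained head. If a hypothesis-free argument is preferred, one can append a constant feature to the embedding and work in $\mathbb{R}^{d+1}$: the extra per-class coordinate of $\mu_y$ then supplies an additional free parameter for each class, and solving a quadratic in that coordinate against a sufficiently negative common $\kappa$ makes the construction unconditional, at the mild cost of enlarging the ambient space. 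Apart from this solvability point, the argument is routine bookkeeping — completing the square, invoking softmax shift-invariance, and comparing coefficients.
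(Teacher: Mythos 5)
Your proof is correct (modulo the rank caveat you yourself flag) and rests on the same ansatz as the paper's: centers of the form $\mu_y=W_{\cdot y}+v$ with a single shift $v$ obtained by solving a linear system in $W$ and $b$. Where you differ is in how you arrive at and justify that system. The paper first proves an auxiliary result (Theorem A.1, from Hess et al.) showing that the softmax classifier's \emph{decision regions} coincide with those of a nearest-center classifier, choosing $v$ via the SVD of $W$ so that a trace term vanishes, and then upgrades this to the full identity on confidences by a separate argument that points on decision boundaries force the per-class factors $\exp(b_y+\tfrac12\lVert\mu_y\rVert^2)$ to be equal. You bypass both steps: completing the square and matching the $\mathbf{x}$-linear and constant coefficients directly yields the requirement that $b_y+\tfrac12\lVert W_{\cdot y}+v\rVert^2$ be class-independent, which is exactly the linear system you write down. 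This is more economical and, in fact, cleaner than the paper's route, whose decision-boundary step is somewhat delicate. Your final observation is also well taken: solvability of the system genuinely requires a rank condition on $W$ (a counterexample is $W=0$ with distinct biases, where the softmax output is constant in $\mathbf{x}$ and non-uniform, which no nearest-center form can reproduce). The theorem as stated says ``for any multinomial regression model,'' but the paper's own auxiliary theorem assumes $\mathrm{rank}(W)\geq C$, so your hypothesis is the one actually needed; your lifted-coordinate fix changes the ambient space to $\mathbb{R}^{d+1}$ and so proves a slightly different statement, but as a remark it is harmless.
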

The proof can be found in Appendix \ref{proof}.
The theorem shows that we can analyze the classifier's performance also from the viewpoint of a nearest-center classifier. This formulation is close to unsupervised objectives such as $k$-means and it explains the effect of neural collapse~\cite{neuralcollapse}. Once the class boundaries are sufficiently optimized, meaning that the class centers don't change much anymore, the objective still incentivizes the embedded points to be close to its class center. Over time, the class centers hence become centroids and the effects of neural collapse are taking place. 

The cross entropy for point $\mathbf{x}_n$ with label $y_n$ is equal to
\begin{align*}
-\log f_\theta (\mathbf{x}_n)_{y_n} &= 
-\log \frac{\exp(-\frac12\lVert \phi(\mathbf{x}_n)-\mu_{y_n}\rVert^2)}{\sum_{c=1}^C\exp(-\frac12\lVert \mathbf{x}_n-\mu_c\rVert^2)}\\
&= \log \sum_{c\neq y_n}\exp(-\frac12\lVert \phi(\mathbf{x}_n)-\mu_c\rVert^2)\\
&\approx \max_{c\neq y_n} -\frac12\lVert \phi(\mathbf{x}_n)-\mu_c\rVert^2.
\end{align*}
As a result, the loss is minimized if the embedded points of class $y_n$ are far away from the other class centers. Since we do not know the class centers, we propose to evaluate instead the proximity of points between classes, that is equivalent to maximizing the distance to other class centers when the class centers are actually centroids.

\begin{definition} For two classes $a$ and $b$, the \emph{Normalized Interclass Distance} between those classes is given as
\begin{equation}
    D_{ab} = \sum_{y_i =a} \sum_{y_j=b } \frac{\mathrm{dist}(\mathbf{x}_i, \mathbf{x}_j)}{|\{k\mid y_k=a\}| |\{l\mid y_l=b\}|}.
\end{equation}
The \emph{Pairwise Normalized Interclass Distances} on dataset $\mathcal{D}$ with $C$ classes are defined as
\begin{equation}
    \text{INT}(\mathcal{D}) =  \sum_{1\leq a\neq b\leq C} D_{ab}
\end{equation}
\end{definition}
If we choose $\mathrm{dist}(x_i,x_j) = \lVert x_i - x_j\rVert^2$, then the normalized interclass distance is equal to a normalized distance of points to the class-centroid. As outlined before, the class centers are expected to be different from class-centroids. Hence, we focus in particular on using the Euclidean distance $\mathrm{dist}(x_i,x_j) = \lVert x_i - x_j\rVert$, that is less sensitive to outliers.
In Figure \ref{fig:combined} we can see how the $INT$ behaves for a toy dataset for a 2D problem with 3 classes.
 \begin{figure}[H]
    \centering
    \begin{subfigure}[]
        \centering
        \includegraphics[width=0.225\textwidth]{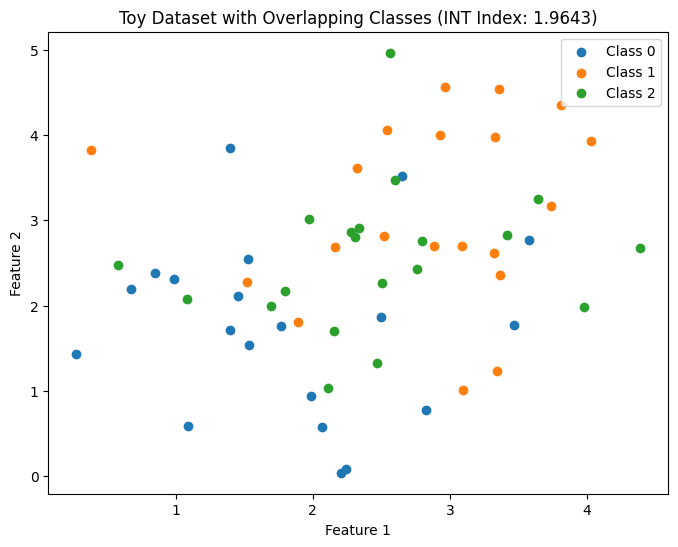}
    \end{subfigure}%
    \begin{subfigure}[]
        \centering
        \includegraphics[width=0.225\textwidth]{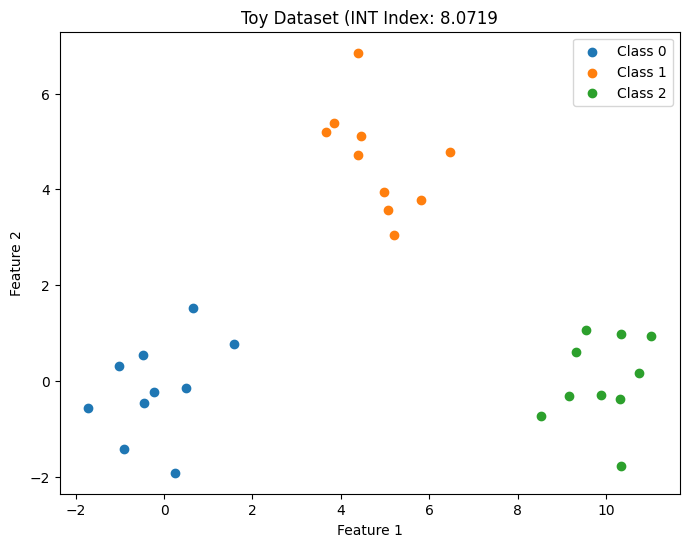}
    \end{subfigure}
    \caption{We show how $INT$ performs with a 2D toy problem. Overlapping classes receive lower INT scores than well-separated clusters.}
    \label{fig:combined}
\end{figure}

\subsection{Concept Variation}

Our second metric is based on concept variation~\cite{conceptvar}, a measure that reflects the irregularity of class label distributions. Understanding this variation helps to assess the structural consistency of a concept $\mathcal{C}$, which can be evaluated by analyzing how class labels are distributed across the feature space.
A highly irregular feature space consists of numerous disjoint regions, often requiring lengthy concept representations. Conversely, a more uniform space contains expansive regions where examples share the same class labels, enabling more concise concept descriptions.
 
The concept variation metric $v$ estimates the likelihood that two neighboring examples belong to distinct classes, thereby approximating the extent of irregularity in the class label distribution. However, the original definition of $v$ is limited to Boolean spaces and assumes that all possible examples in the space are accessible.

Formally, let $\mathbf{x}_{i_1},\ldots, \mathbf{x}_{i_n}$ be the $n$ closest neighbors at Hamming distance one of an example $\mathbf{x}_i$ in an $n$-dimensional Boolean space.  The concept variation for a single instance $\mathbf{x}_i$ is defined as
\begin{equation}
    v(\mathbf{x}_i) = \frac{1}{n}  \sum_{j=1}^n \delta(y_i, y_{i_j})
\end{equation}
where $\delta(y_i, y_{i_j}) = 1$ if $y_i \neq y_{i_j}$, and 0 otherwise. Concept variation is computed as the average of this factor across all examples in the feature space:
\begin{equation}
    v_{total} = \frac{1}{2^n}  \sum^{2^n}_{i=1} v(\mathbf{x}_i) \in [0,1].
\end{equation}
The applicability of $v_{total}$ is limited to artificial domains where all possible examples can be generated. \cite{conceptvar} show how the concepts with high $v_{total}$ are difficult to learn for most conventional inductive algorithms. Since in real-world scenarios this assumption no longer holds, a distance matrix $D$ is used to determine the contribution of each example to the amount of concept variation. The contribution to the amount of concept variation depends on the distance when $\mathbf{x}_i$, $\mathbf{x}_j$ differ in value by using the following function: $w_{ij} = 2^{-\alpha \cdot (D_{ij} / (\sqrt{d} - D_{ij}))}$, where $\alpha$  modulates the effect of distance. The higher the $\alpha$, the less weight is assigned to the distance as it increases between the two examples. The upper bound is reached if $\mathrm{dist}(\mathbf{x}_i,\mathbf{x}_j)=0$ and the lower bound is reached when $\mathrm{dist}(\mathbf{x}_i, \mathbf{x}_j)=\sqrt{n}$ which corresponds to the maximum distance between two examples in an $n$-dimensional feature space. 
We describe our implementation of concept variation in Algorithm \ref{alg:concept_variation}.

\begin{algorithm}[H]
   \caption{Concept Variation}
   \label{alg:concept_variation}
\begin{algorithmic}
\STATE \textbf{Input:} Embedding $\phi({\mathbf{x}})\in\mathbb{R}^d$, Labels $y$, Hyperparameter $\alpha$, 
\STATE \textbf{Output:} Concept variation $v$ of each example

\STATE \textbf{Step 1: Normalize Data:}
\STATE \hspace{1em} Normalize $\phi({\mathbf{x}})$ to $[0, 1]$, such that $\hat{\phi}({\mathbf{x}_i}) \in [0,1]$.

\STATE \textbf{Step 2: Compute Pairwise Euclidean Distances:}
\STATE \hspace{1em} Compute distance matrix $D \in \mathbb{R}^{n \times n}$, where $D_{ij} = \|\hat{\phi}({\mathbf{x}_i}) - \hat{\phi}({\mathbf{x}_j})\|_2$.

\STATE \textbf{Step 3: Compute Concept Variation:}
\STATE \hspace{1em} Compute weight matrix $w \in \mathbb{R}^{n \times n}$, where $\displaystyle w_{ij} = 2^{-\alpha \cdot (D_{ij} / \max\{\sqrt{d} - D_{ij},\epsilon\})}$
\STATE \hspace{1em} Set $w_{ii} = 0$ for all $i$ to eliminate self-contributions.
\STATE \hspace{1em} Compute concept variation for each instance:
\STATE \hspace{2em} $v(\mathbf{x}_i) = \frac{\sum_{j=1}^n w_{ij} \cdot \delta(y_i,y_j)}{\sum_{j=1}^n w_{ij}}$.
\end{algorithmic}
\end{algorithm}
\begin{table*}[tb]
\vspace{-0.1in}
\small
\center
\begin{tabular}{lrrrrrrr|r}
\toprule
\textbf{Dataset} & DIBaS & Flowers & Sports & Plants & Textures & Cars & RESISC & Average \\
\textbf{Metric} &  &  &  &  &  &  &  &  \\
\midrule
LogMe & -0.49 & -0.45 & -0.45 & 0.09 & -0.26 & -0.40 & -0.37 & -0.33 \\
SFDA & -0.09 & -0.17 & -0.17 & 0.21 & -0.26 & -0.07 & -0.17 & -0.11 \\
LDA & 0.59 & 0.09 & -0.14 & 0.27 & 0.07 & -0.25 & 0.09 & 0.10 \\
N-LEEP & -0.22 & -0.53 & -0.60 & -0.15 & -0.10 & -0.60 & -0.53 & -0.39 \\
ETran & \textbf{0.86} & 0.28 & 0.19 & 0.46 & 0.57 & 0.10 & 0.09 & 0.37 \\
TransRate & 0.34 & 0.31 & 0.31 & 0.32 & 0.84 & 0.18 & 0.38 & 0.38 \\
NCTI & 0.14 & -0.37 & -0.37 & -0.07 & -0.51 & -0.26 & -0.53 & -0.28 \\
\midrule
Concept Variance(ours) & 0.03 & 0.52 & 0.52 & 0.13 & 0.04 & -0.06 & 0.93 & 0.30 \\
$INT$(ours) & 0.62 & 0.77 & 0.77 & 0.41 & \textbf{0.87} & \textbf{0.61} & 0.84 & \textbf{0.70} \\
Concept Variance+INT(ours) & 0.56 & \textbf{0.93} &\textbf{ 0.93} & \textbf{0.83} & 0.04 & 0.52 & \textbf{0.93} & 0.68 \\

\bottomrule
\end{tabular}
\caption{Weighted Kendall's $\tau_\omega$ of LogMe SFDA LDA N-LEEP, ETran, TransRate and NCTI vs $INT$ and Concept Variation (the higher the better) in limited data settings.}
\label{tab:extendedresults}
\end{table*}
We use the standard deviation of $v$ for transferability estimation:
\begin{equation}
\sigma_v = \sqrt{\frac{1}{n} \sum_{i=1}^n (v(\mathbf{x}_i) - \bar{v})^2};\quad  \bar{v} = \frac{1}{n} \sum_{i=1}^n v(\mathbf{x}_i)
\end{equation}
A higher standard deviation in concept variation indicates greater diversity in how individual examples’ labels differ from their neighbors across the dataset. In other words, some points may be surrounded by mostly similar labels (low concept variation) while others see very mixed labels (high concept variation), making the overall distribution of concept variation more spread out.

\section{Experiments and Results}

In this section, we evaluate our metrics against the current state-of-the-art SITE metrics in image classification (Section \ref{sec:imgcls}), image classification with a low data regime(Section \ref{ltd_data}), self-supervised learning (Section \ref{sec:ssl}), source selection (Section \ref{sec: source_selection}) and larger network size (Section \ref{sec:big}). We also provide ablation studies in Section \ref{sec:alpha} and Section \ref{sec: intdist}. 

\paragraph{Fine-Tuning Implementation details} 

We can obtain the ground-truth ranking by fine-tuning all pre-trained models with hyper-parameters sweeping on target datasets.
We use a similar fine-tuning setup as SFDA~\cite{SFDA} for our experiments. To obtain test accuracies, we fine-tune pre-trained models with a grid
search over learning rates $\{ 10^{-1}, 10^{-2}, 10^{-3},  10^{-4}\}$ and a weight decay in $\{ 10^{-3},  10^{-4},  10^{-5},  10^{-6}, 0\}$ with early stopping. We determine the best hyper-parameters based on the validation set, and fine-tune the pre-trained model on the target dataset with this parameter and without early stopping. The resulting test accuracy is used as the ground truth score $G_m$ for model $\Phi_m$. This way, we obtain a set of scores $\{G_m\}_{m=1}^M$ as the ground truth to evaluate our pre-trained model rankings.
To compute our metrics, we first perform a single forward pass of the pre-trained model through all target examples to extract their features. We compute the interclass distances with the Euclidean distance metric and we use the default value of $\alpha=2$ as defined by \cite{conceptvar} for our second metric.
We provide the implementation of our metrics in Appendix \ref{lst:intcode}, together with the wall clock time analysis (Experiment \ref{sec:wall}).

Following the previous works~\cite{SFDA,logme,NLEEP} we use weighted Kendall's tau $\tau$~\cite{10.1145/2736277.2741088}  to evaluate the effectiveness of transferability metrics. Kendall's tau $\tau$ returns the ratio of concordant pairs minus discordant pairs when ennumerating all $(_2^M)$ pairs of $\{T_m\}_{m=1}^M$ and  $\{G_m\}_{m=1}^M$ as given by:
\begin{equation}
    \tau = \frac{2}{M(M-1)}\sum_{1\leq i \leq j M} \mathrm{sgn}(G_i-G_j)\mathrm{sgn}(T_i-T_j)
\end{equation}

Where $\mathrm{sgn}(x)$is the signum function returning 1 if $x>0$ and -1 otherwise. In the weighted version of Kendall's tau $\tau_\omega$, the ranking performance of top-performing models is measured to evaluate transferability metrics. In principle, a higher $\tau_\omega$ indicates that the transferability metric produces a better ranking for pretrained models. 

\subsection{Experiment 1: Image Classification}\label{sec:imgcls}
We propose a new and challenging experimental setup for the transferability assessment of neural networks. We use the following datasets for our experiments from the Meta-Album suite~\cite{meta-album-2022}: Flowers~\cite{flowers}, Plant Village~\cite{plantvillage}, DIBaS~\cite{DIBAS}, RESISC~\cite{RESISC}, Cars~\cite{cars}, Textures~\cite{textures}, and 100-sports~\cite{100-sports}. For our pre-trained model zoo we use Data-efficient Image Transformer (DeiT)~\cite{deit}, Co-scale Conv-Attentional Image Transformer (CoaT)~\cite{coat}, Multi-Axis Vision Transformer(MaxViT)~\cite{maxvit}, MobileViT~\cite{mobilevit}, Multi-scale Vision Transformer(MVit)~\cite{mvit}, and  Cross-Covariance Image Transformer(XCiT)~\cite{XCit} pretrained on ImageNet-1k from Huggingface timm library~\cite{rw2019timm}. For image classification tasks, we select LogMe, SFDA, N-LEEP, ETran, LDA Baseline (LDA was available with ETran codebase), NCTI, and TransRate. We describe our model zoo and dataset selection, along with the limitations of the current experimental design, in detail in Appendix \ref{experimental_design}.\\
The results in Table \ref{tab:extendedresults} show the effectiveness of our methods for datasets with $INT$ achieving the highest average $\tau_\omega$. $INT$ seems to outperform on every dataset except DIBaS. 


\subsection{Experiment 2: Limited data setting}\label{ltd_data}
In a realistic transfer learning setting, we have only a small amount of data available for the target task. To emulate this more challenging setting, we perform an experiment with a limited set of 40 examples per class, reported in Table \ref{tab:limited results}. Here as well, we observe that Concept Variance performs much more robustly than other baseline methods and also with respect to $INT$ in the DIBaS and Flowers dataset. The combination of both $INT$ and Concept Variation where the classes are balanced with a limited set of examples. We observe that $INT$ obtains the highest score as well in this scenario and Concept Variance obtains the second highest score in this scenario. We also observe that the sum of $INT$ with Concept Variance obtains higher $\tau_\omega$ than $INT$ or Concept Variance separately.

\begin{table*}[t]
\centering
\begin{tabular}{lrrrrrrr|r}
\toprule
\textbf{Dataset} & DIBaS & Flowers & Sports & Plants & Textures & Cars & RESISC & Average \\
\textbf{Metric} &  &  &  &  &  &  &  &  \\
\midrule
LogMe & 0.42 & -0.25 & -0.53 & -0.07 & -0.02 & -0.53 & -0.37 & -0.19 \\
SFDA & -0.41 & 0.27 & 0.05 & -0.01 & 0.06 & 0.19 & 0.12 & 0.04 \\
LDA & -0.25 & -0.08 & -0.29 & -0.02 & 0.52 & -0.15 & -0.12 & -0.06 \\
N-LEEP & 0.21 & -0.37 & -0.53 & -0.47 & 0.13 & -0.60 & -0.45 & -0.30 \\
ETran & 0.05 & 0.06 & 0.21 & 0.16 & 0.80 & 0.21 & 0.18 & 0.24 \\
TransRate & -0.24 & 0.12 & 0.31 & 0.27 & 0.41 & 0.31 & 0.40 & 0.23 \\
NCTI & 0.15 & -0.53 & -0.53 & 0.02 & -0.12 & -0.53 & -0.36 & -0.27 \\
\midrule
Concept Variance(ours) & \textbf{0.87 }& \textbf{0.73} & 0.27 & 0.17 & -0.10 & 0.28 & 0.73 & 0.42 \\
$INT$(ours) & -0.24 & 0.52 &\textbf{ 0.77} & 0.66 & \textbf{0.62} & \textbf{0.77 }& \textbf{0.91} & 0.57 \\
Concept Variance+INT(ours) & 0.62 & 0.62 & 0.73 & \textbf{0.79} & 0.39 & 0.70 & 0.73 & \textbf{0.65} \\

\bottomrule
\end{tabular}

    \caption{Weighted Kendall's $\tau_\omega$ of LogMe SFDA LDA N-LEEP, ETran, TransRate, and NCTI vs $INT$ and Concept Variation (the higher the better) in the limited data setting.}
    \label{tab:limited results}
\end{table*}

\subsection{Experiment 3: SSL Experiments}\label{sec:ssl}
For our third set of experiments, we apply our metrics to the Self-Supervised Learning(SSL) task. For self-supervised learning we use BYOL~\cite{BYOL}, Deepcluster-v2~\cite{deepclsuter}, Infomin~\cite{InfoMin}, InDis~\cite{InDis}, MoCo-v1~\cite{MoCo}, MoCo-v2~\cite{MoCo}, PCL-v1, PCL-v2~\cite{pclv1v2}, Sela-V2~\cite{selav1v2} and SWAV~\cite{swav} with a pretrained ResNet-50 backbone. We use CIFAR10, CIFAR100~\cite{cifar10and100} and Caltech101~\cite{caltech101} for our experiments.\footnote{We use setup and code from the SFDA~\cite{SFDA} Github repository that is in turn adapted from \citet{ssltransfer} and report the performance in Table \ref{tab:ssl}.} We report the $\tau_\omega$ of LogMe, SFDA and our metrics in Table \ref{tab:ssl}. In SSL experiments we show that $INT$ works on par with SFDA with a tiny difference of average $\tau_\omega$ of 0.752 whereas SFDA average $\tau_\omega$ is 0.749. We did not get result on TransRate for the other two datasets after 120 minutes of compute for the other two datasets.
\begin{table}[H]
\centering
\begin{tabular}{lrrrr}
\toprule
Dataset & Caltech101 & CIFAR10 & CIFAR100 & Avg \\
Metric &  &  &  &  \\
\midrule
LogMe & 0.55 & 0.42 & 0.15 & 0.379 \\
SFDA & 0.61 & 0.85 & 0.79 & \textbf{0.749} \\
TransRate & - & 0.77 & - & - \\
\midrule
CV(ours) & -0.36 & 0.24 & -0.26 & -0.120 \\
$INT$(ours) & 0.67 & 0.83 & 0.76 & \textbf{0.752}\\
\bottomrule
\end{tabular}
\caption{Experiments on SSL methods comparing $INT$, Concept Variance(CV), LogMe and SFDA}
\label{tab:ssl}
\end{table}

\subsection{Experiment 4: Source Selection}\label{sec: source_selection}
In this experiment, we evaluate whether our proposed transferability estimate works well if our source models are trained on more specific source datasets (e.g., where all classes belong to one domain), as opposed to  very broad source datasets, such as ImageNet. 
We use the models CoaT,  DeiT, MAXVit,  MVitv2, and XciT pre-trained on ImageNet-1k and then fine-tuned on the following datasets: Flowers, RESISC, DIBaS, and Plant Village. Our resulting model zoo consists of 24 models. The target datasets in this experiment are Sports, Textures, and Cars. We report the performance for this experiment in Table \ref{tab:sourceselection}. We observe that $INT$ performs here together with TransRate best. 
\begin{table}[H]
\scriptsize
\setlength{\tabcolsep}{12pt} 
\centering
\begin{tabular}{lrrrr}
\toprule
\textbf{Dataset}& Sports & Textures & Cars & Average \\
\textbf{Metric} &  &  &  &  \\
\midrule
LogMe & 0.384 & 0.477 & 0.325 & 0.395 \\

TransRate & 0.433 & 0.158 & 0.800 & \textbf{0.464} \\
\midrule
$INT$(ours) & 0.441 & 0.223 & 0.723 & \textbf{0.464} \\

\bottomrule
\end{tabular}
    \caption{Source selection results comparing $INT$, TransRate and LogMe}
    \label{tab:sourceselection}
\end{table}

\subsection{Experiment 5: Bigger Networks}\label{sec:big}
In this experiment, we evaluate bigger models with larger embedding sizes as well as parameter range. We take the larger version of models introduced in Section \ref{sec:imgcls}, details of selected models can be found in Appendix \ref{models}. The embedding size also quadruples in most of the models. We report our findings for this experiment in Table \ref{tab: Big datasets}. We observe every metric performance drop in this scenario. This also implies that it is harder to estimate the representational complexity when embeddings are larger. Our methods still show much better performance than other metrics. The performance is notably worse for the DIBaS and Cars datasets, where every metric yields a negative $\tau_\omega$. We found that this decline is primarily due to two networks that performed exceptionally poorly, achieving accuracies between 10-35\%—the lowest observed across all networks and datasets.

\begin{table*}[t]
    \centering
\begin{tabular}{lrrrrrrrr}
\toprule
Dataset & DIBaS & Flowers & Sports & Plants & Textures & Cars & RESISC & Average \\
Metric &  &  &  &  &  &  &  &  \\
\midrule
LogMe & -0.33 & -0.90 & -0.53 & -0.05 & 0.53 & -0.49 & -0.23 & -0.29 \\
SFDA & -0.22 & -0.05 & 0.25 & \textbf{0.84 }& 0.44 & -0.49 & \textbf{0.19} & 0.14 \\
LDA & -0.14 & -0.25 & -0.06 & 0.73 & 0.63 & -0.75 & -0.22 & -0.01 \\
ETran & -0.53 & -0.30 & -0.43 & 0.72 & 0.52 & -0.79 & -0.27 & -0.16 \\
TransRate & -0.21 & -0.35 & -0.13 & 0.69 & 0.63 & -0.61 & -0.14 & -0.02 \\
NCTI & 0.02 & -0.59 & -0.09 & -0.25 & 0.08 & -0.36 & -0.18 & -0.19 \\
\midrule
$INT$ & -0.21 & 0.40 & 0.41 & 0.80 & \textbf{0.63} & -0.49 & 0.05 & \textbf{0.23} \\
Concept Variation & \textbf{0.90} & \textbf{0.58} & \textbf{0.46} & -0.15 & 0.13 & -0.05 & -0.53 & \textbf{0.19} \\
\bottomrule
\end{tabular}

    \caption{Experiment on Larger Networks}
    \label{tab: Big datasets}
\end{table*}

\subsection{Experiment 6: Effect of $\alpha$ on Concept Variance}\label{sec:alpha}
We analyze the effect of class weights on Concept Variance performance. We show the effect of $\alpha$ in Figure \ref{fig: alpha}. The results suggest that the optimal $\alpha$ value is between 2-5.
\begin{figure} 
    \includegraphics[width=\linewidth]{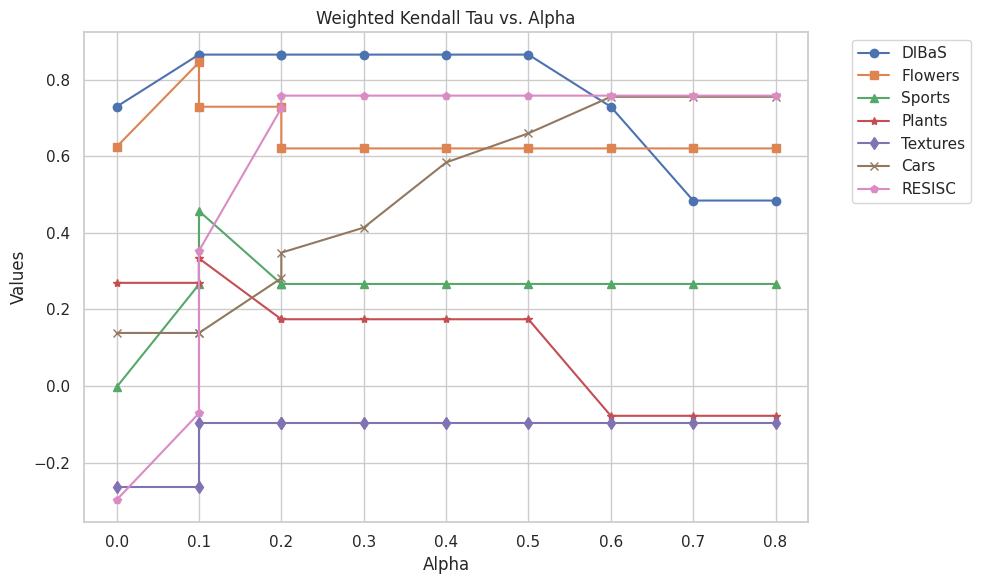}
    \caption{Alpha $\alpha$ studies}\label{fig: alpha}
\end{figure}

\subsection{Experiment 7: $INT$ distance metric studies}\label{sec: intdist}
In this study, we analyze how $INT$ performs under various distance metrics. We compare the following metrics: Euclidean, Squared Euclidean, Manhattan, and Cosine. We used the implementations of the pairwise distances from the CuML~\cite{cuml} and CuPy~\cite{cupy_learningsys2017} libraries. We report the performance of $INT$ with different distance metrics in Table \ref{tab:int_ablation}. We observe that Euclidean gives the most optimal performance out of all very close to squared Euclidean with the difference in the Textures dataset. Cosine distance reports the worst performance with negative $\tau_\omega$. 

\begin{table*}[h]
    \centering
\begin{tabular}{lrrrrrrrr}
\toprule
Dataset & DIBaS & Flowers & Sports & Plants & Textures & Cars & RESISC & Average \\
Distance &  &  &  &  &  &  &  &  \\
\midrule
Euclidean & 0.62 & 0.77 & 0.77 & 0.41 & 0.87 & 0.61 & 0.84 & 0.70 \\
Squared Euclidean & 0.62 & 0.77 & 0.77 & 0.41 & 0.79 & 0.61 & 0.84 & 0.69 \\
Manhattan & 0.61 & 0.62 & 0.62 & 0.54 & 0.77 & 0.50 & 0.70 & 0.62 \\
Cosine & 0.09 & -0.35 & -0.35 & -0.28 & 0.43 & -0.50 & -0.16 & -0.16 \\
\bottomrule
\end{tabular}
    \caption{Extended $INT$ Ablation study}
    \label{tab:int_ablation}
\end{table*}

\subsection{Experiment 8: Wall clock analysis/time taken}\label{sec:wall}
We benchmark the computation time for scoring MVitv2 Base on the DIBaS dataset. We observe that our metrics provide better performance than second and third-best performing metrics(Transrate and SFDA) in a fraction of time with $INT$ performing 5 times faster than SFDA and 43 times better than TransRate and Concept Variance performing 30 times faster than SFDA and 280 times faster than TransRate, making these metrics suitable for large scale assessment of pretrained models. 
\begin{table}[H]
    \centering
\begin{tabular}{l|r}
\toprule
Metric & Time(s) \\
\midrule
LogMe & 0.43\\
SFDA & 6.32 \\
TransRate & 56.35 \\
\midrule
INT(ours) & \textbf{1.28} \\
Concept Variance(ours) & \textbf{0.20} \\
\bottomrule
\end{tabular}
    \caption{Wall clock benchmarking of different top-performing metrics (in seconds).}
    \label{tab:time}
\end{table}

\section{Conclusion and Discussion}
With the current proliferation of pretrained models, finding the best model for a given target task becomes an increasingly important component of transfer learning. In this work, we propose two novel transferability metrics for classification tasks that are based on the idea that the 'simplest' representation, i.e. the one that leads to the clearest separation of classes, will be the best starting point for future finetuning. We evaluate both metrics, Pairwise Normalized Interclass Distance ($INT$) and Concept Variance, on a wide set of in-depth experiments across many image classification problems. We find that the $INT$ metric significantly outperforms all of the state-of-the-art transferability metrics (by 38\% in Experiment \ref{sec:imgcls} and 33\% in Experiment \ref{ltd_data}), although depending on the target task, Concept Variance or a combination of both works better. Moreover, both metrics can be computed efficiently, faster or on par with existing methods.

\subsection{Future work}
An important direction for future work is extending transferability metrics to tasks beyond classification, such as object detection, keypoint regression, semantic segmentation, and depth estimation. These tasks pose unique challenges, as they require processing continuous and multidimensional target variables rather than discrete labels. Nevertheless, we believe that metrics that are similarly based on the complexity of the pre-trained embeddings can provide valuable insights and improvements for pretrained model selection and transfer learning in these domains as well.


\section*{Impact Statement}
This paper presents the work around transferability estimation and its links to simplicity of representations. Our work can help in reducing the time taken by ML practitioners to select a pretrained model hence also reducing computational costs of training multiple models and reducing carbon footprint of a ML workflow.
\bibliography{main}
\bibliographystyle{icml2025}

\newpage
\appendix
\onecolumn
\section{Proofs}\label{proof}
We state first the following theorem from ~\cite{hess2020softmax} for completeness, since we need it for our analysis.
\begin{theorem}\label{thm:pen=km}
Let $h_{W,b}(\mathbf{x})=\mathrm{softmax}(W\mathbf{x}+b)$ be a multinomial regression model with $W\in\mathbb{R}^{C\times d}$ and $b\in\mathbb{R}^C$, computing class predictions as
$y=\argmax_{1\leq c\leq C} \mathbf{x}^\top W_{\cdot c} +b_c$. If $W$ has at least a rank of $r\geq C$, then there exist $C$ class centers $\mu_c\in\mathbb{R}^d$ such that every point $\mathbf{x}$ is assigned to the class having the nearest center:
\[y=\argmin_{1\leq c\leq C}\left\lVert\mathbf{x}-\mu_{c}\right\rVert^2.\]
\end{theorem}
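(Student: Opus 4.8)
The plan is to exhibit the class centers explicitly and reduce the claimed identity to a statement about the logits of the softmax. The starting observation is the shift-invariance of the softmax: for any vector $\mathbf{u}\in\mathbb{R}^C$ and any scalar $g$, $\mathrm{softmax}(\mathbf{u})=\mathrm{softmax}(\mathbf{u}+g\mathbf{1})$, where $\mathbf{1}$ is the all-ones vector. Hence it suffices to produce centers $\mu_1,\dots,\mu_C$ together with a function $g:\mathbb{R}^d\to\mathbb{R}$ that does \emph{not} depend on the class index, such that $-\frac12\lVert\mathbf{x}-\mu_c\rVert^2 = W_{\cdot c}^\top\mathbf{x}+b_c+g(\mathbf{x})$ for every $\mathbf{x}$ and every $c$. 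Expanding $-\frac12\lVert\mathbf{x}-\mu_c\rVert^2 = -\frac12\lVert\mathbf{x}\rVert^2 + \mu_c^\top\mathbf{x} - \frac12\lVert\mu_c\rVert^2$ and absorbing the class-independent term $-\frac12\lVert\mathbf{x}\rVert^2$ into $g$, the task reduces to matching $\mu_c^\top\mathbf{x}-\frac12\lVert\mu_c\rVert^2$ with $W_{\cdot c}^\top\mathbf{x}+b_c$ up to a class-independent additive term.

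Next I would make the ansatz $\mu_c = W_{\cdot c}+\mathbf{v}$, where $\mathbf{v}\in\mathbb{R}^d$ is a single shift vector shared by all classes, to be determined. Substituting this into $\mu_c^\top\mathbf{x}-\frac12\lVert\mu_c\rVert^2$ and collecting terms, the linear-in-$\mathbf{x}$ contributions already agree: the surplus $\mathbf{v}^\top\mathbf{x}$ is class-independent and is absorbed into $g$. What remains is matching the constants, which holds if and only if $W_{\cdot c}^\top\mathbf{v} = -b_c-\frac12\lVert W_{\cdot c}\rVert^2$ for all $c$, i.e. the linear system $W\mathbf{v} = -b-\frac12\mathbf{q}$ with $\mathbf{q}=(\lVert W_{\cdot 1}\rVert^2,\dots,\lVert W_{\cdot C}\rVert^2)^\top$. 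Because $W$ has full row rank (the regime $r\geq C$ of Theorem~\ref{thm:pen=km}, i.e. the setting in which a nearest-center reformulation exists), the map $\mathbf{v}\mapsto W\mathbf{v}$ is surjective onto $\mathbb{R}^C$, so a solution $\mathbf{v}$ exists. This mirrors the construction behind Theorem~\ref{thm:pen=km}, with the difference that here I keep the pointwise equality of the logits rather than only the resulting $\argmax$/$\argmin$ equivalence.

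Finally I would substitute $\mu_c=W_{\cdot c}+\mathbf{v}$ back and verify that, with $g(\mathbf{x}):=\mathbf{v}^\top\mathbf{x}-\frac12\lVert\mathbf{v}\rVert^2-\frac12\lVert\mathbf{x}\rVert^2$, one obtains $-\frac12\lVert\mathbf{x}-\mu_c\rVert^2=(W\mathbf{x}+b)_c+g(\mathbf{x})$ for every $c$. Applying the softmax to both vectors of values, the common factor $e^{g(\mathbf{x})}$ cancels between numerator and denominator, which yields exactly $h_{W,b}(\mathbf{x})_y = \exp(-\frac12\lVert\mathbf{x}-\mu_y\rVert^2)\big/\sum_{c=1}^C\exp(-\frac12\lVert\mathbf{x}-\mu_c\rVert^2)$, as claimed.

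The step I expect to be the main obstacle is the solvability of $W\mathbf{v}=-b-\frac12\mathbf{q}$, since this is precisely where a rank condition on $W$ is required. If one wishes to cover $W$ of deficient row rank, the cleanest fixes are: exploit the additional $+\gamma\mathbf{1}$ degree of freedom coming from softmax-invariance (which only requires $-b-\frac12\mathbf{q}\in\mathrm{range}(W)+\mathrm{span}(\mathbf{1})$), or restrict attention to the affine span of the finitely many embedded points $\phi(\mathbf{x}_n)$ where the centers are only constrained up to the orthogonal complement, or simply read the statement as living in the full-rank regime of Theorem~\ref{thm:pen=km}. The rest of the argument is routine bookkeeping with the quadratic expansion and softmax shift-invariance.
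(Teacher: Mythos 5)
Your proof is correct, and it rests on the same key construction as the paper's: the ansatz $\mu_c = W_{\cdot c}+\mathbf{v}$ with a single shared shift $\mathbf{v}$, reduced to the solvability of a $C$-equation linear system in $\mathbf{v}$, which the rank-$C$ hypothesis guarantees. The execution, however, is genuinely more direct. The paper's proof takes a detour through a finite dataset: it stacks the points into a matrix $D$, rewrites the argmax classification as a $k$-means-style trace objective over a one-hot assignment matrix $Y$, and then chooses $\mathbf{v}$ (via an explicit SVD of $W$) so that the offending trace term becomes constant in $Y$. You instead prove the pointwise identity $-\frac12\lVert\mathbf{x}-\mu_c\rVert^2 = W_{\cdot c}^\top\mathbf{x}+b_c+g(\mathbf{x})$ with $g$ class-independent, for \emph{every} $\mathbf{x}\in\mathbb{R}^d$; the argmax/argmin equivalence of Theorem~\ref{thm:pen=km} falls out immediately, and so does the softmax identity of Theorem~\ref{thm:loss}, so your single argument subsumes both results and replaces the paper's separate boundary-matching argument for the latter. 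Two small remarks: surjectivity of $\mathbf{v}\mapsto W\mathbf{v}$ is all you need, so the paper's SVD is dispensable, as you note; and your condition $W_{\cdot c}^\top\mathbf{v}=-b_c-\frac12\lVert W_{\cdot c}\rVert^2$ carries the correct sign on $b_c$ (the paper's $u_c=b_c-\frac12\lVert W_{\cdot c}\rVert^2$ appears to inherit a sign slip from its Equation~(\ref{eq:k_means_nearest_centroid}); this does not affect existence, only the explicit value of $\mathbf{v}$). Your closing observation that rank deficiency could be absorbed into the softmax's $+\gamma\mathbf{1}$ invariance, i.e.\ one only needs $-b-\frac12\mathbf{q}\in\mathrm{range}(W)+\mathrm{span}(\mathbf{1})$, is a genuine (if minor) strengthening not present in the paper.
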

\begin{proof}
We show that for any dataset and network there exists a set of class centers such that the classification does not change when classifying according to the nearest center.

We gather given data points in the matrix $D$:

\[D^\top=\begin{pmatrix} \mathbf{x}_1 & \hdots & \mathbf{x}_N\end{pmatrix}\in \mathbb{R}^{d\times N}.\]

We define $Z=W+\mathbf{v}\mathbf{1}_C^\top$, where $\mathbf{v}\in \mathbb{R}^d$ and $\mathbf{1}_C\in\{1\}^C$ is a constant one vector. The (soft)max classification of all data points in $D$ is then given by the one-hot encoded matrix $Y\in\{0,1\}^{N\times C} $ that optimizes the objective
\begin{equation}
\argmax_Y \tr(Y(W^\top D^\top +\mathbf{b}\mathbf{1}_{N}^\top))
= \argmin_Y \lVert D-YZ^\top\rVert^2 +\tr((2\mathbf{b}\mathbf{1}_{C}^\top -Z^\top Z) Y^\top Y)) \label{eq:k_means_nearest_centroid}
\end{equation}

The matrix $Z\in \mathbb{R}^{d\times C}$ indicates a set of $C$ centers by its columns. The first term of Equation (\ref{eq:k_means_nearest_centroid}) is minimized if $Y$ assigns the class with the closest centroid to each data point in $D$. Hence, if we can show that there exists a vector $v\in\mathbb{R}^d$ such that the second term of Equation (\ref{eq:k_means_nearest_centroid}) is equal to zero (given $D$ and $W$) then we have shown what we wanted to prove. Since $\lvert Y_{j\cdot}\rvert=1$ (every point is assigned to exactly one class), the matrix $Y^\top Y$ is a diagonal matrix, having the number of data points assigned to each class on the diagonal: $Y^\top Y = \diag(\lvert Y_{\cdot 1}\rvert,\ldots, \lvert Y_{\cdot c}\rvert)$. Hence, the trace term on the right of Equation (\ref{eq:k_means_nearest_centroid}) equals
\begin{equation}
\sum_{c=1}^C (2\mathbf{b}\mathbf{1}_C^\top -Z^\top Z)_{cc} \lvert Y_{\cdot c}\rvert
= \sum_{c=1}^C (2b_c -\lVert W_{\cdot k}\rVert^2 -2v^\top W_{\cdot k}) \lvert Y_{\cdot k}\rvert -\lVert \mathbf{v}\rVert^2 m\label{eq:traceterm}
\end{equation}
We define the vector $\mathbf{u}\in\mathbb{R}^C$ such that $u_c = b_c -\frac{1}{2}\lVert W_{\cdot c}\rVert^2$. The right term of Equation (\ref{eq:traceterm}) is constant for a vector $\mathbf{v}$ satisfying $u_c=\mathbf{v}^\top W_{\cdot c}$ for $1\leq c\leq C$. That is, we need to solve the following equation for $\mathbf{v}$:
\begin{align*}
    \mathbf{u}=W^\top \mathbf{v}= V\Sigma U^\top \mathbf{v}.
\end{align*}
Since the rank of $W$ is $C$ (full column rank), this equation has a solution. It is given by the SVD of $W=U\Sigma V^\top$, where $U\in\mathbb{R}^{d\times C}$ is a left orthogonal matrix ($U^\top U=I$), $\Sigma\in\mathbb{R}_+^{C\times C}$ is a diagonal matrix having only positive values, and $V\in\mathbb{R}^{C\times C}$ is an orthogonal matrix ($V^\top V = VV^\top = I$). Setting $\mathbf{v}=U\Sigma^{-1}V^\top \mathbf{u}$, this vector solves the equation.
\end{proof}
\begin{theorem}[Restatement of Thm~\ref{thm:loss} ]
    For any multinomial regression model $h_{W,b}(\mathbf{z})=\mathrm{softmax}(W\mathbf{x}+b)$ exist class centers $\mu_1,\ldots, \mu_C$ such that 
    \[h_{W,b}(\mathbf{x})_y = \frac{\exp(-\frac12\lVert \mathbf{x}-\mu_y\rVert^2)}{\sum_{c=1}^C\exp(-\frac12\lVert \mathbf{x}-\mu_c\rVert^2)}\]
\end{theorem}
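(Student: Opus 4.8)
The plan is to reduce the claim to a matching of the logit vectors of the two softmax models, exploiting that $\mathrm{softmax}$ is invariant under adding a summand that is constant across the $C$ classes --- even one that depends on $\mathbf{x}$: $\mathrm{softmax}(\mathbf{a})=\mathrm{softmax}(\mathbf{a}+t\mathbf{1}_C)$ for every scalar $t$. Completing the square gives
\[
-\tfrac12\lVert\mathbf{x}-\mu_c\rVert^2 = -\tfrac12\lVert\mathbf{x}\rVert^2 + \mu_c^\top\mathbf{x} - \tfrac12\lVert\mu_c\rVert^2 ,
\]
and the leading term $-\tfrac12\lVert\mathbf{x}\rVert^2$ is common to all classes, hence cancels inside the softmax. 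So it suffices to produce centers $\mu_1,\dots,\mu_C$ and a map $g:\mathbb{R}^d\to\mathbb{R}$ \emph{not depending on} $c$ such that $\mu_c^\top\mathbf{x}-\tfrac12\lVert\mu_c\rVert^2 = W_{\cdot c}^\top\mathbf{x}+b_c+g(\mathbf{x})$ for all $c$ and $\mathbf{x}$.

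Next I would take the ansatz $\mu_c = W_{\cdot c}+\mathbf{v}$ with one shared translation $\mathbf{v}\in\mathbb{R}^d$, which is exactly the device behind Theorem~\ref{thm:pen=km}. The linear part then contributes $W_{\cdot c}^\top\mathbf{x}+\mathbf{v}^\top\mathbf{x}$, and the $\mathbf{v}^\top\mathbf{x}$ piece is $c$-independent, so it is absorbed into $g$. What is left is a purely class-dependent requirement, $-\tfrac12\lVert W_{\cdot c}+\mathbf{v}\rVert^2 = b_c + \gamma$, i.e. $\mathbf{v}^\top W_{\cdot c} = -b_c-\tfrac12\lVert W_{\cdot c}\rVert^2+\gamma'$ for a scalar $\gamma'$ we are free to choose. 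Stacking the $C$ equations yields a linear system $W^\top\mathbf{v}=\mathbf{u}$ of the same form as in the proof of Theorem~\ref{thm:pen=km}; when $W$ has full column rank $C$ (the regime $d\ge C$ of interest) one may take $\gamma'=0$ and read off $\mathbf{v}=U\Sigma^{-1}V^\top\mathbf{u}$ from $W=U\Sigma V^\top$. Substituting this $\mathbf{v}$, the two softmax arguments coincide up to $g(\mathbf{x}) = \mathbf{v}^\top\mathbf{x}-\tfrac12\lVert\mathbf{x}\rVert^2+\text{const}$, and softmax-invariance closes the argument.

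The genuinely delicate point is precisely this solvability step: the naive guess $\mu_c = W_{\cdot c}$ fails, because the quadratic term $-\tfrac12\lVert\mu_c\rVert^2$ would then have to reproduce the affine offsets $b_c$, which it cannot in general; the shared shift $\mathbf{v}$ supplies exactly the missing degrees of freedom, and guaranteeing that such a $\mathbf{v}$ exists is where the rank hypothesis of Theorem~\ref{thm:pen=km} enters. I would therefore invoke that theorem's construction directly, or --- to dispense with the rank assumption --- note that after using the free parameter $\gamma'$ it is enough that the vector with entries $-b_c-\tfrac12\lVert W_{\cdot c}\rVert^2$ lies in $\mathrm{col}(W^\top)+\mathrm{span}(\mathbf{1}_C)$, restricting to the span of the columns of $W^\top$ otherwise. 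Everything else is bookkeeping.
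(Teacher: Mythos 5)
Your argument is correct and rests on the same core device as the paper's proof --- the shared shift $\mu_c = W_{\cdot c} + \mathbf{v}$, with $\mathbf{v}$ obtained from a $C$-equation linear system solved via the SVD of $W$ under the rank-$C$ hypothesis --- but you close the argument by a genuinely different and, in fact, tighter route. The paper first proves that the decision boundaries are preserved (Theorem~\ref{thm:pen=km}), rewrites the softmax as
\[
\mathrm{softmax}(W\mathbf{x}+\mathbf{b})_y=\frac{\exp(-\frac12\lVert\mu_y-\mathbf{x}\rVert^2)\exp(b_y+\frac12\lVert\mu_y\rVert^2)}{\sum_{c=1}^C\exp(-\frac12\lVert\mu_c-\mathbf{x}\rVert^2)\exp(b_c+\frac12\lVert\mu_c\rVert^2)},
\]
and then argues on points of the pairwise decision boundaries that the residual factors $\exp(b_c+\frac12\lVert\mu_c\rVert^2)$ must all coincide. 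You instead pick $\mathbf{v}$ so that the logits of the two models agree up to a class-independent function of $\mathbf{x}$, i.e.\ so that $b_c+\frac12\lVert\mu_c\rVert^2$ is constant \emph{by construction}, which makes the boundary argument unnecessary. This buys you two things. First, your linear system $\mathbf{v}^\top W_{\cdot c}=-b_c-\frac12\lVert W_{\cdot c}\rVert^2+\gamma'$ carries the opposite sign on $b_c$ from the system $u_c=b_c-\frac12\lVert W_{\cdot c}\rVert^2$ used in the appendix; your sign is the one that actually renders $b_c+\frac12\lVert\mu_c\rVert^2$ class-independent (the appendix's choice yields $2b_c+\frac12\lVert\mathbf{v}\rVert^2$), so your derivation sidesteps a sign inconsistency on which the paper's boundary argument quietly depends. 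Second, you make explicit that the theorem as stated silently requires the rank hypothesis of Theorem~\ref{thm:pen=km}, or your weaker condition that the vector with entries $-b_c-\frac12\lVert W_{\cdot c}\rVert^2$ lies in the range of $\mathbf{v}\mapsto(\mathbf{v}^\top W_{\cdot c})_{c=1}^C$ modulo $\mathrm{span}(\mathbf{1}_C)$; and your handling of the a-posteriori constant $-\frac12\lVert\mathbf{v}\rVert^2$ absorbed into $\gamma$ is sound, since it is class-independent.
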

\begin{proof}
According to the proof of Thm. \ref{thm:pen=km}, exists a vector $\mathbf{v}$ such that the class boundaries do not change when we replace the standard multinomial regression prediction with the nearest center prediction, where the centers are defined by $\mu_c=W_{\cdot c}+\mathbf{v}$. For this set of centroids we have the following relationship of the multinomial regression confidence to the center-based confidence.
    \begin{align*}
    \mathrm{softmax}(W\mathbf{x} + \mathbf{b})_y &= \frac{\exp(\mathbf{x}^\top W_{\cdot y} +b_y)}{\sum_{c=1}^C\exp(\mathbf{x}^\top W_{\cdot c}+b_c)}\cdot \frac{\exp(\mathbf{x}^\top \mathbf{v})}{\exp(\mathbf{x}^\top \mathbf{v})}\\
    &=\frac{\exp(\mathbf{x}^\top \mu_y+b_y)}{\sum_{c=1}^C\exp(\mathbf{x}^\top \mu_c+b_c)}\\
    &=\frac{\exp(-\frac12\lVert \mu_y\rVert^2 +\mathbf{x}^\top \mu_y+b_y+\frac12\lVert \mu_y\rVert^2)}{\sum_{c=1}^C\exp(-\frac12\lVert \mu_c\rVert^2 + \mathbf{x}^\top \mu_c+b_c+\frac12\lVert \mu_c\rVert^2)}\frac{\exp(\frac12 \lVert \mathbf{x}\rVert^2)}{\exp(\frac12 \lVert \mathbf{x}\rVert^2)}\\
    &=\frac{\exp(-\frac12\lVert \mu_y- \mathbf{x}\rVert^2)\exp(b_y+\frac12\lVert \mu_y\rVert^2)}{\sum_{c=1}^C\exp(-\frac12\lVert \mu_c- \mathbf{x}\rVert^2)\exp(b_c+\frac12\lVert \mu_c\rVert^2)}
\end{align*}
Since the class boundaries of the nearest-center prediction do not change the original class boundaries, we have for any point $\mathbf{x}$ on the decision boundary between class $a$ and $b$ $\mathrm{softmax}(W\mathbf{x}+\mathbf{b})_a=\mathrm{softmax}(W\mathbf{x}_2+\mathbf{b})$. At the same time we have $\exp(-\frac12\lVert \mu_{y_1} - \mathbf{x}\rVert^2)=\exp(-\frac12\lVert \mu_{y_2} - \mathbf{x}\rVert^2)$ because the nearest center classifier with centers $\mu_c$ is not changing the decision boundary according to Thm~\ref{thm:pen=km}. As a result we have (using the equivalence above)
$$\exp(b_{y_1}+\frac12\lVert \mu_{y_1}\rVert^2) = \exp(b_{y_2}+\frac12\lVert \mu_{y_2}\rVert^2).$$
Since this equation holds for any point on the decision boundary, we have proven our final result.
\end{proof}
\section{Omitted Experiment Details in Section 4}
\subsection{Dataset details}
\begin{enumerate}
    \item \textbf{DIBaS:} Digital Image of Bacterial Species (DIBaS). The Digital Images of Bacteria Species dataset (DIBaS) (https://github.com/gallardorafael/DIBaS-Dataset) is a dataset of 33 bacterial species with around 20 images for each species. 
    \item \textbf{Flowers}: Flowers dataset from Visual Geometry Group, University of Oxford. The Flowers dataset(https://www.robots.ox.ac.uk/~vgg/data/flowers/102/index.html) consists of a variety of flowers gathered from different websites and some are photographed by the original creators. These flowers are commonly found in the UK. The images generally have large scale, pose and light variations. Some categories of flowers in the dataset has large variations of flowers while other have similar flowers in a category. 
    \item \textbf{Sports: }The 100-Sports dataset(https://www.kaggle.com/datasets/gpiosenka/sports-classification) is a collection of sports images covering 73 different sports. Images are 224x224x3 in size and in .jpg format. Images were gathered from internet searches. The images were scanned with a duplicate image detector program and all duplicate images were removed. 
    \item \textbf{Plants: }The Plant Village dataset \url{https://data.mendeley.com/datasets/tywbtsjrjv/1} contains camera photos of 17 crop leaves. The original image resolution is 256x256 px. This collection covers 26 plant diseases and 12 healthy plants.
    \item \textbf{Textures: }The original Textures dataset is a combination of 4 texture datasets: KTH-TIPS and KTH-TIPS 2 (https://www.csc.kth.se/cvap/databases/kth-tips/index.html), Kylberg Textures Dataset (http://www.cb.uu.se/~gustaf/texture/) and UIUC Textures Dataset. The data in all four datasets is collected in laboratory conditions, i.e., images were captured in a controlled environment with configurable brightness, luminosity, scale and angle. The KTH-TIPS dataset was collected by Mario Fritz and KTH-TIPS 2 dataset was collected by P. Mallikarjuna and Alireza Tavakoli Targhi, created in 2004 and 2006 respectively. Both of these datasets were prepared under the supervision of Eric Hayman and Barbara Caputo. The data for Kylberg Textures Dataset and UIUC Textures Dataset data was collected by the original authors of these datasets in September 2010 and August 2005 respectively. 
    \item \textbf{Cars: } The original Cars dataset (https://ai.stanford.edu/~jkrause/cars/car\_dataset.html) was collected in 2013, and it contains more than 16 000 images from 196 classes of cars. Most images are on the road, but some have different backgrounds, and each image has only one car. Each class can have 48 to 136 images of variable resolutions. 
    \item \textbf{RESISC:} RESISC45 dataset(https://gcheng-nwpu.github.io/) gathers 700 RGB images of size 256x256 px for each of 45 scene categories. The data authors strive to provide a challenging dataset by increasing both within-class diversity and between-class similarity, as well as integrating many image variations. Even though RESISC45 does not propose a label hierarchy, it can be created from other common aerial image label organization scheme.
    \item \textbf{CIFAR10:} The dataset contains 60,000 color images in 10 classes, with each image in the size
of 32×32. Each class has 5,000 training samples and 1,000 testing samples.
    \item \textbf{CIFAR 100:} The dataset is the same as CIFAR-10 except that it has 100 classes each of which
contains 500 training images and 100 testing images.
DTD (Cimpoi et al., 2014) The dataset consists of
    \item  \textbf{CALTECH 101:} The dataset contains 9,146 images from 101 object categories. The number of images in each category is between 40 and 800. 
\end{enumerate}

\subsection{Models}\label{models}
Pretrained \textbf{timm} models used in Experiment \ref{sec:imgcls}, Experiment \ref{ltd_data}, Experiment \ref{sec:alpha} and Experiment \ref{sec: intdist}.
\begin{table}[H]
    \centering
    \begin{tabular}{lccc}
        \toprule
        \textbf{Model Name} & \textbf{Parameters (M)} & \textbf{FLOPs (GMACs)} & \textbf{Input Resolution} \\
        \midrule
        CoaT Lite Tiny & 5.7 & 1.6 & 224 x 224 \\
        DeiT Tiny Patch16 224 & 5.7 & 1.3 & 224 x 224 \\
        MaxViT Tiny TF 224 & 30.9 & 5.6 & 224 x 224 \\
        MobileViT XS & 2.3 & 0.7 & 256 x 256 \\
        MViTv2 Tiny & 5.1 & 1.1 & 224 x 224 \\
        XCiT Tiny 12 P8 224 & 6.7 & 4.8 & 224 x 224 \\
        MobileViTv2 100 & 3.6 & 0.9 & 256 x 256 \\
        \bottomrule
    \end{tabular}
    \caption{Details of the selected models from Hugging Face.}
    \label{tab:model_details}
\end{table}

Pretrained timm models used in Section \ref{sec:big}
\begin{table}[H]
    \centering
    \begin{tabular}{lccc}
        \toprule
        \textbf{Model Name} & \textbf{Parameters (M)} & \textbf{FLOPs (GMACs)} & \textbf{Input Resolution} \\
        \midrule
        DeiT Base Patch16 224 & 86.6 & 17.6 & 224 x 224 \\
        MobileViTv2 200 & 6.9 & 1.7 & 256 x 256 \\
        XCiT Small 12 P8 224 & 26.3 & 9.1 & 224 x 224 \\
        MaxViT Small TF 224 & 69.0 & 11.7 & 224 x 224 \\
        MViTv2 Base & 51.5 & 10.2 & 224 x 224 \\
        CoaT Lite Medium 384 & 20.0 & 4.0 & 384 x 384 \\
        \bottomrule
    \end{tabular}
    \caption{Details of the bigger models from Hugging Face.}
    \label{tab:model_details_big}
\end{table}

\subsection{Experimental design}\label{experimental_design}
We use different search space as compared to the one in recent works such as SFDA, TransRate and ETran, We use a more realistic experimental search space, we limit out model sizes from 5-25M parameters for Experiment \ref{sec:imgcls}, \ref{ltd_data}, \ref{sec:big}. We select models from top performing networks from PaperWithCode leaderboard on ImageNet Classification \url{https://paperswithcode.com/sota/image-classification-on-imagenet}. We also select every target domain dataset form a different domain. Our model search space is more complex, older works have used only 3 variants of models in different sizes which can show more models but in the end larger models eg Resnet 150 always outperform Resnet18 under similar training regimes as one can verify from the test scores( We verify this by looking at the same performance metrics used by SFDA, Etran and PED \url{https://github.com/mgholamikn/ETran/blob/main/tw.py},  So one can just use embedding size as a transferability metric and get a positive/better performing $\tau_\omega$.). Our datasets are also larger and have larger image sizes as well(128x128). Our reasoning behind the suggested experiment design is that one usually is not trying to select between a 50M parameter model and a 250M parameter model, overparameterized model from the same family tend to work better but different models under same parameter limit is the right case where one need transferability estimation. We make sure we have no network which consistently perform worse than other networks and avoid models from same families. The model training and fine tuning is performed on single V100 GPU and metric calculation is done on single NVIDIA RTX A6000 GPU.

\section{Time complexity analysis}
\subsection{INT}
$\mathcal{O}(k^2.N^2,D)$ Where $k$ is the number of classes, $N$ is the number of samples and $D$ is the embedding size of the network. Assuming $k<<N$ the overall complexity of $INT$ is $\mathcal{O}(N^2.D)$.
\subsection{Concept Variation}
The overall complexity for concept variation $v$ is $\mathcal{O}(N^2.D)$.

\section{Python Code for $INT$ and concept variance}
\begin{lstlisting}[language=Python]

import cupy as cp
import numpy as np
from itertools import combinations
from cuml.metrics import pairwise_distances

def ft_int_gpu(N, y, dist_metric="euclidean"):

    N_gpu = cp.array(N)
    y_gpu = cp.array(y)

    # Get unique classes
    classes = np.unique(y)  # Use NumPy for unique classes since it's small
    class_num = len(classes)

    if class_num == 1:
        return np.nan

    # Compute pairwise normalized interclass distances
    pairwise_norm_intercls_dist = []
    for id_cls_a, id_cls_b in combinations(range(class_num), 2):
        group_a = N_gpu[y_gpu == classes[id_cls_a]]
        group_b = N_gpu[y_gpu == classes[id_cls_b]]
        dists = pairwise_distances(group_a, group_b, metric=dist_metric)
        pairwise_norm_intercls_dist.append(cp.sum(dists) / dists.size)

    # INT computation
    norm_factor = 2.0 / (class_num * (class_num - 1.0))
    sum_intercls_dist = cp.sum(cp.array(pairwise_norm_intercls_dist))

    return float(sum_intercls_dist * norm_factor)
\end{lstlisting}
\label{lst:intcode}

\begin{lstlisting}[language=Python]
def precompute_concept_dist_gpu(N, concept_dist_metric)
    # Convert numpy array to CuPy array
    N = cp.asarray(N)

    # Normalize N to range [0, 1]
    scaler = cuml.preprocessing.MinMaxScaler(feature_range=(0, 1))
    N = scaler.fit_transform(N)

    # Compute pairwise Euclidean distances using cuML
    concept_distances = cuml.metrics.pairwise_distances(N, metric="euclidean")

    return concept_distances

def ft_conceptvar_gpu(
    N: np.ndarray,
    y: np.ndarray,
    conceptvar_alpha: float = 2.0,
    concept_dist_metric: str = "euclidean",
    concept_minimum: float = 1e-10,
) -> t.Tuple[float, cp.ndarray]:
    # Convert numpy arrays to CuPy arrays

    N = cp.asarray(N)
    y = cp.asarray(y)

    concept_distances = precompute_concept_dist_gpu(N, concept_dist_metric)

    n_col = N.shape[1]

    div = cp.sqrt(cp.array(n_col)) - concept_distances
    div = cp.clip(div, a_min=concept_minimum, a_max=None)  
    weights = cp.power(2, -conceptvar_alpha * (concept_distances / div))
    cp.fill_diagonal(weights, 0.0)

    rep_class_matrix = cp.expand_dims(y, 0).repeat(y.shape[0], axis=0)
    class_diff = (rep_class_matrix.T != rep_class_matrix).astype(cp.float32)
    w2 = weights * class_diff
    conceptvar_by_example = cp.sum(w2, axis=0) / cp.sum(weights, axis=0)
    std_dev = cp.std(conceptvar_by_example).item()

    return std_dev
\end{lstlisting}
\label{lst:cvcode}

\section{Additional Tables}

\begin{table}[H]

\begin{tabular}{lrrrrrrrr}
\toprule
Dataset & DIBaS & Flowers & Sports & Plants & Textures & Cars & RESISC & Average \\
Alpha &  &  &  &  &  &  &  &  \\
\midrule
$\alpha=$0.5 & 0.73 & 0.62 & -0.00 & 0.27 & -0.26 & 0.14 & -0.30 & 0.17 \\
$\alpha=$1.0 & 0.87 &\textbf{ 0.84 }& 0.27 & 0.27 & -0.26 & 0.14 & -0.07 & 0.29 \\
$\alpha=$1.5 & 0.87 & 0.73 & \textbf{0.46} & \textbf{{0.33}} & \textbf{-0.10} & 0.14 & 0.35 & 0.40 \\
$\alpha=$2.0 & 0.87 & 0.73 & 0.27 & 0.17 & -0.10 & 0.28 & 0.73 & 0.42 \\
$\alpha=$2.5 & 0.87 & 0.62 & 0.27 & 0.17 & -0.10 & 0.35 & 0.76 & 0.42 \\
$\alpha=$3.0 & 0.87 & 0.62 & 0.27 & 0.17 & -0.10 & 0.41 & \textbf{0.76} & 0.43 \\
$\alpha=$4.0 & 0.87 & 0.62 & 0.27 & 0.17 & -0.10 & 0.58 & 0.76 & 0.45 \\

$\alpha=$5.0 & 0.87 & 0.62 & 0.27 & 0.17 & -0.10 & 0.66 & 0.76 & \textbf{0.46} \\
$\alpha=$6.0 & 0.73 & 0.62 & 0.27 & -0.08 & -0.10 &\textbf{ 0.76} & 0.76 & 0.42 \\
$\alpha=$7.0 & 0.48 & 0.62 & 0.27 & -0.08 & -0.10 & 0.76 & 0.76 & 0.39 \\
$\alpha=$8.0 & 0.48 & 0.62 & 0.27 & -0.08 & -0.10 & 0.76 & 0.76 & 0.39 \\
\bottomrule
\end{tabular}
\caption{Studying the affect of class weights on $\tau_\omega$ on limited data setting}
\end{table}

\end{document}